\relax
\documentclass[letterpaper]{article} 
\usepackage{aaai18}  
\usepackage{times}  
\usepackage{helvet}  
\usepackage{courier}  
\usepackage{url}  
\usepackage{graphicx}  
\frenchspacing  
\setlength{\pdfpagewidth}{8.5in}  
\setlength{\pdfpageheight}{11in}  
  \pdfinfo{
/Title (Towards Explainable Inference about Object Motion using Qualitative Reasoning)
/Author (Xiaoyu Ge, Jochen Renz, Hua Hua)}
\setcounter{secnumdepth}{1}
\usepackage{algpseudocode}
\usepackage[linesnumbered]{algorithm2e}
\SetKwInput{KwInput}{Input}
\usepackage{xcolor}
\usepackage{soul}
\usepackage[utf8]{inputenc}
\usepackage{tabularx} 
\usepackage{booktabs}
\graphicspath{{img/}}
\usepackage{subcaption}
\usepackage{booktabs}
\usepackage{amssymb}
\usepackage{amsthm}
\usepackage{amsmath}
\usepackage{mathtools}
\usepackage{bm}
\newtheorem{Theorem}{Theorem}
\newtheorem{Lemma}{Lemma}

\newtheorem{Definition}{Definition}
\newtheorem{Rule}{Rule}
\newtheorem{Heuristic}{Heuristic}
\newcommand{\nlp}{\bm{v}}
\newcommand{\nlv}{\bm{v}}
\newcommand{\nav}{\bm{\omega}}
\newcommand{\nal}{\bm{\omega}}
\newcommand{\lv}{\bm{qv}}
\newcommand{\av}{\bm{q\omega}}
\newcommand{\fd}{\bm{qd}}
\newcommand{\fr}{\bm{qr}}
\newcommand{\force}{\bm{f}}
\newcommand{\qforce}{\bm{qf}}
\newcommand{\allsignvec}{\bm{\mathfrak{S}}}
\usepackage{scalerel}
\DeclareMathOperator*{\bigvecplus}{\scalerel*{\oplus}{\textstyle\sum}}
\usepackage{mathabx}

\newcommand{\vecdot}{\odot}
\newcommand{\veccross}{\otimes}
\newcommand{\signplus}{\oplus}
\newcommand{\signminus}{\ominus}
\newcommand{\signtimes}{\odot}
\newcommand{\tosign}{Q}

\begin{document}
\title{Towards Explainable Inference about Object Motion using Qualitative Reasoning}
\author {Xiaoyu Ge and Jochen Renz and Hua Hua \\ xiaoyu.ge@anu.edu.au
\\ Research School of Computer Science and Engineering
\\ The Australian National University
\\ Canberra, Australia}
\maketitle
\begin{abstract}
The capability of making explainable inferences regarding physical processes has long been desired. One fundamental physical process is object motion. Inferring what causes the motion of a group of objects can even be a challenging task for experts, e.g., in forensics science.
Most of the work in the literature relies on physics simulation to draw such inferences. The simulation requires a precise model of the underlying domain to work well and is essentially a black-box from which one can hardly obtain any useful explanation.

By contrast, qualitative reasoning methods have the advantage in making transparent inferences with ambiguous information, which makes it suitable for this task. However, there has been no suitable qualitative theory proposed for object motion in three-dimensional space. In this paper, we take this challenge and develop a qualitative theory for the motion of rigid objects. Based on this theory, we develop a reasoning method to solve a very interesting problem: Assuming there are several objects that were initially at rest and now have started to move. We want to infer what action causes the movement of these objects.
\end{abstract}

\section{Introduction}
We are living in an era where an increasing number of AI agents entering into our daily lives and helping us with daily tasks such as household chores. To successfully perform these tasks, an AI agent needs to understand its surrounding environment and to be able to draw useful inferences based on their perceptual input. Living in a physical world requires AI be capable of inferring physical behaviours of everyday objects. This capability not only involves predicting what behaviours an object can have but also being able to figure out what causes their behaviours. In this paper, we focus on reasoning about object motion which is a most common physical behaviour of an object. Making an inference about object motion can be a challenging task for AI.

For example, Fig.~\ref{fig:before} shows a scene where a set of blocks were initially at rest. At a certain time, there was an action made to exert an impulse at one of the blocks, which caused the movement of the blocks as depicted in Fig.~\ref{fig:after}. When we observe this change, one natural question to ask is where and in which direction the impulse has been made. We humans can make such inference rapidly given only the information obtained from our visual perception. The knowledge we have about the scene is ambiguous, in the sense that we do not know exact physical parameters of the blocks, precise shapes or coordinates of their locations etc. However, we can still draw useful inferences based on this piece of knowledge and we can provide clear explanations of how the inference is derived. What human does in making the spatial or physical inference is conceptually similar \cite{hegarty2010components} to the methodology adopted by the qualitative reasoning community where the entities in that problem domain are characterised by a spatial representation, and the inference is drawn by reasoning about the constraints or relations between the entities.

\begin{figure}
\centering
  \subcaptionbox{\label{fig:before}}{\includegraphics[height=4cm,width=.48\linewidth]{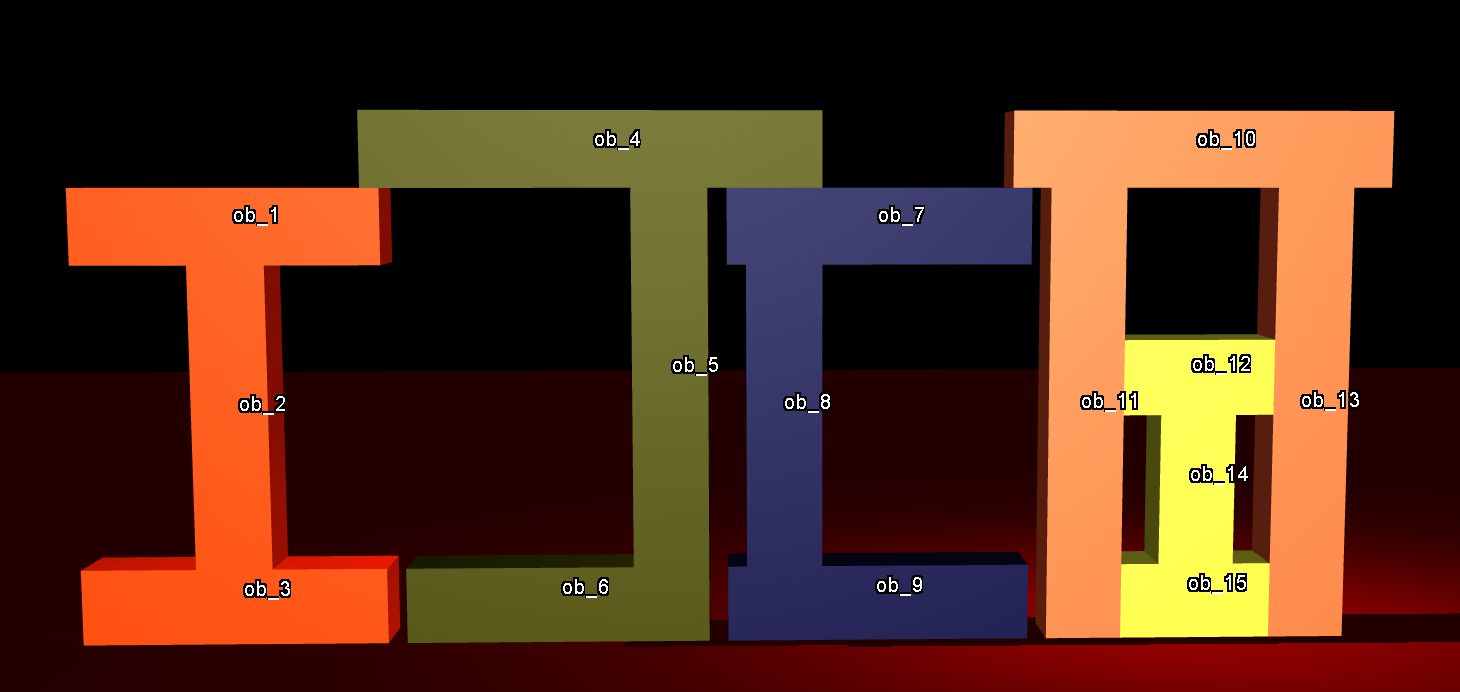}}
  \subcaptionbox{\label{fig:after}}{\includegraphics[height=4cm, width=.48\linewidth]{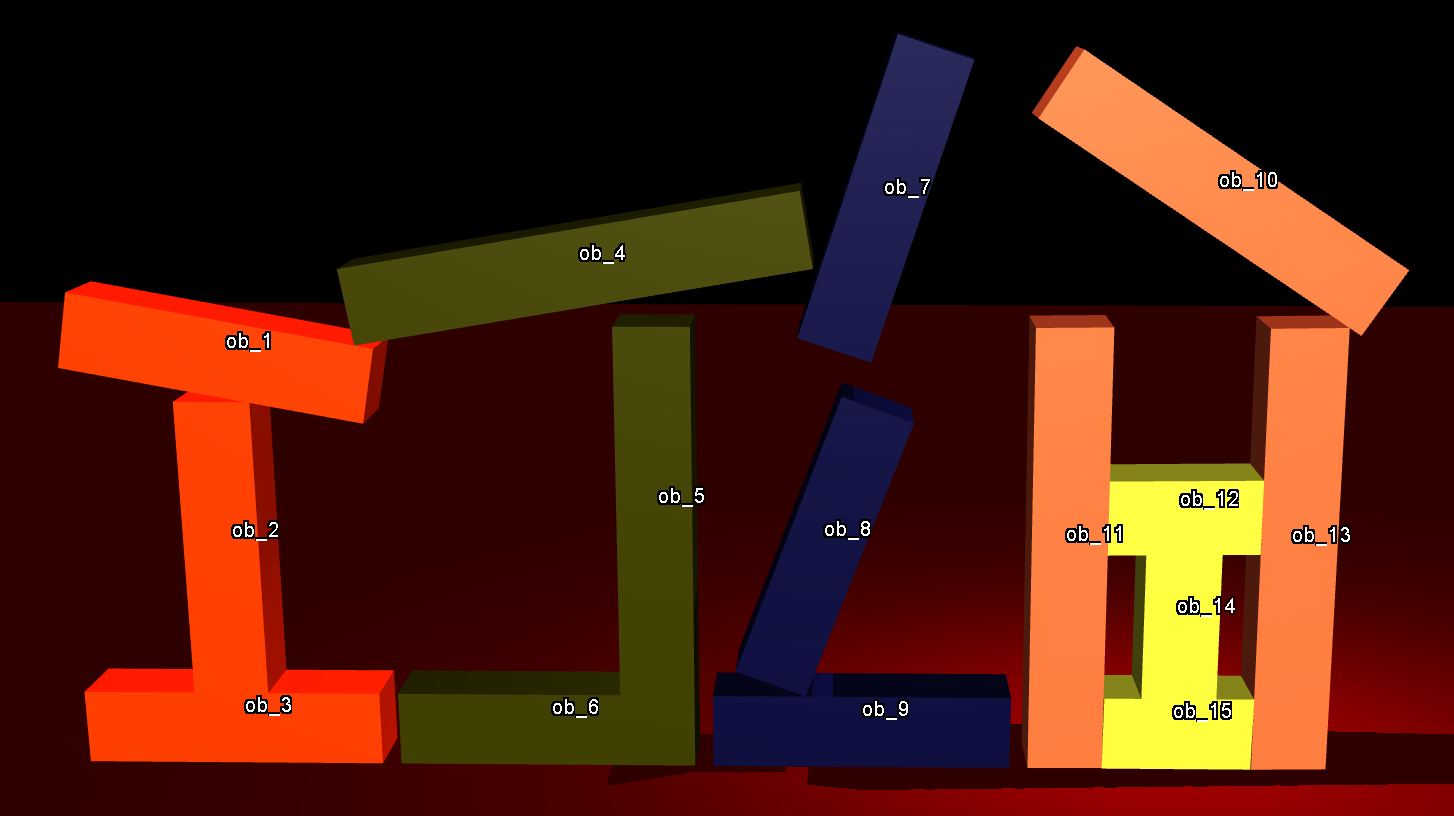}}
  \caption{\label{fig:example} An example scenario where we want to infer what action has been made to cause the illustrated movement of the objects. 
}

\end{figure}
As object motion in 3D space can be complex, it is critical to ensure that the qualitative representation is expressive enough and can capture all possible motions of an object in the space. Hence, we develop our theory according to a well-established physics modelling approach\cite{baraff1997introduction} that is also widely used in nowadays physics engines. We devise a qualitative representation for spatial entities and constraints that the modelling approach uses for motion prediction. We show that our theory can cover all the possibilities of the motion of objects in a system that can be described by the modelling approach. 
The key contribution of this paper is that we provide a qualitative theory for modelling rigid body motion in three-dimensional space. The theory is flexible as one can specify constraints in both qualitative or quantitative formulas based on their prior knowledge, and the reasoning method can be straightforwardly integrated into visual perception module. We demonstrate its usefulness by using it to solve a class of problems as illustrated in the above example.

\section{Background and Related Work}
There are two main research streams of modelling and reasoning about physical systems, namely qualitative physics and simulation-based reasoning.
Qualitative physics (for a survey, see \cite{davis2008physical}) emerged in the early 1980s, which uses symbolic approach to describe behaviours of physical systems. For example, \cite{forbus1984qualitative} proposed the qualitative process theory for modelling physical processes. \cite{forbus1981study} provided a reasoning method for the motion of a ball in 2D space. \cite{nielsen1990qualitative,stahovich2000qualitative} formalised different qualitative theories to describe constrained mechanical systems such as a clock. \cite{kuipers1990qualitative} developed a qualitative simulation framework that predicts physical behaviours based on qualitative differential equation models. However, this framework lacks a way to model object motion and forces in three-dimensional space. Physical interactions (e.g., surface contact) between extended objects are not considered in this framework either. Recently, there has been some qualitative spatial calculi developed for three-dimensional space such as the spatial representation of three-dimensional rotation \cite{asl2014qualitative} and trajectory \cite{mavridis2015qtc3d}. As far as we know, none of the above methods addresses motion of extended objects in three-dimensional space.

As an object movement can also be viewed as a spatial change of an object, the topic of our paper is broadly related to the area of qualitative spatial change and actions. One well-established work in this area is \cite{galton2000qualitative} that models qualitative state space based on the sign representation \cite{de1984qualitative}. A spatial change is then described as a trajectory through some state space. However, it does not deal with any problem related to reasoning about forces and their effects on object motion. For example, what would be the consequence of applying an action to a structure composed of rigid objects? Given a spatial change of a structure, what forces could lead to this change. In this paper, we will provide a solution to these problems.

In the domain of Angry Birds AI competition \cite{renz2016angry}, there has been some work on using qualitative reasoning \cite{walkega2016qualitative} or logic formalisation \cite{calimeri2016angry} to analyse behaviours of two-dimensional rigid objects in simulation. The rules are often empirically obtained and are specific to the problem domain. These methods also lack a formal investigation on to which degree the behaviours of the objects can be captured. By contrast, this paper developed a general formalism for a much more complex domain and established a connection between the formalism and a rigid body theory that is widely applied in physics simulations.

Simulation-based approaches \cite{kunze2015envisioning} have been widely used in robotics and cognitive science. When a system is completely modelled, the simulation can offer accurate predictions of the system behaviours. When there is an uncertainty in the model, it could be handled by probabilistic sampling \cite{Battaglia2013}. The main problem of simulation-based approaches is that they can hardly capture all possible behaviours of a physical system given partial observations. Besides, as simulation is essentially a numerical integration of equations based on the derivatives obtained from solving a complex constraint system, one can hardly derive any useful explanations out of it.

Recently, there has been work \cite{yilirim2017physical} on combining symbolic rules and geometric constraints to make physics inferences. \cite{tossaint2015logic} proposed a hierarchical framework where the high-level symbolic reasoning is performed to generate qualitative plans that will be instantiated by a geometric solver at the low level. This framework offers insights on combining qualitative and quantitative reasoning to solve challenging real-world problems, and our proposed theory can be naturally integrated into such hierarchical framework.

\section {A Qualitative Theory of Object Motion}
We consider the domain of rigid object dynamics and propose a qualitative theory for representing and reasoning about motion of rigid objects. The formalisation of the theory is inspired by works from the two different fields: simulation and qualitative reasoning. Specifically, we develop a qualitative representation to describe forces and their effects on object motion. The qualitative representation is based on sign calculus which has been proved as a versatile tool in modelling physical process \cite{de1984qualitative}. To model contact forces between objects and the physical constraints between the forces, we refer to the theory \cite{baraff1997introduction} of rigid body dynamics that is widely applied in many state-of-the-art physics engines. The goal of our theory is to capture all the possible motions of a group of rigid objects when the qualitative representation of their forces is known. Given an observed change in the motion of an object, the formalisation should also allow inferring what forces have caused the change.

In the below sections, we start by introducing a standard routine of rigid body simulation. We then present a qualitative representation and reasoning schema for object motion, which is in reminiscent of the simulation routine. Based on the proposed qualitative representation, we provide a formal definition of the action inference problem mentioned in the introduction.
\subsection{Rigid Body Dynamics in Simulation}
In a typical simulation of rigid body systems, the behaviour of the system is modelled as an ordinary differential equation that depends on time, given by $f(\bm{x}, t) = \dot{\bm{x}}$ where $\bm{x}$ is the state vector of the system and $\dot{\bm{x}}$ is the time derivative of the state. Time in the simulation is often discretised into time points. Given a system state $\bm{x_t}$ at time point $t$, to predict the state at a future time point $t + \Delta t$ the simulation runs a numerical method, e.g., using Euler's method the future state is calculated as

\begin{equation}\label{eq:euler}
\bm{x}_{t+\Delta t} = \bm{x}_{t} + \Delta t \cdot \dot{\bm{x}}_{t}
\end{equation}

The key step in the simulation is to calculate the time derivative $\dot{\bm{x}}$ at each time point. The simulation first performs collision detection to identify objects that are in contact with each other. The region of a contact area is approximated by a set of \emph{contact points} that are the corner points of the region. The simulation then computes normal and friction forces at the contact points according to certain physical constraints. Given that force is the time derivative of momentum, the momentum of the objects can be calculated based on the obtained forces according to Eq.~\ref{eq:euler}.

Hence, as long as we know what forces are acted upon an object at a given time point, we can predict the motion of the object at a next time point. On the other hand, as long as we know the difference between the object motion at two time points, we can infer what forces are contributing to these changes. Given that we do not know the precise parameters of the underlying system, the calculation above is likely to provide inaccurate results, and this inaccuracy will be accumulated during numerical integration, which makes it less likely to find real explanations. This problem can be solved by reformulating the rigid dynamics theory into a qualitative theory that has the advantage of dealing with ambiguous and imprecise knowledge.

\subsection {Qualitative Representation of Object Motion}
As force and momentum are vector quantities, we begin by introducing a standard qualitative representation for vectors. Specifically, we use sign calculus \cite{de1984qualitative} to represent a vector of numbers with each component of the vector is replaced by a sign that indicates whether the component is positive ($+$), negative($-$), or zero($0$). Hence, we make 27 distinctions of a three dimensional vector. We denote the set of all the distinct sign vectors as $\bm{\mathfrak{S}}= \{(a, b, c)| a,b,c \in \{+, - , 0 \} \}$. The inverse of a sign or sign vector $s$ is written as $s^{-1}$. E.g., $(+, -, 0)^{-1} = (-, +, 0)$. In this paper, we adopt a fixed reference frame with the $xy-$ plane representing the ground plane and the $z$ axis in the opposite direction of the gravity.

Fig.~\ref{fig:operation} shows the table of three basic arithmetic operations between signs, namely, addition($\signplus$), subtraction($\signminus$), multiplication($\signtimes$). The asterisk sign $*$ in the tables refers to an indefinite result with $* = \{+, -, 0\}$. The addition and subtraction between sign vectors are defined similarly, simply applying the corresponding sign operation between their components pair-wisely. A sign vector that has an indefinite component is treated as a set of sign vectors that has only definite components. E.g., the sign vector $(*, + , +)$ refers to a set $\{(0, +, +), (-, +, +), (+, +, +)\}$. We use a big addition symbol $\bigvecplus$ in the same way as the symbol $\Sigma$ used in the mathematical summation, which will generate a set of sign vectors as a result.
\begin{table}
					 \footnotesize
							 \begin{tabular}[t]{|c|ccc|}
									 \hline
									 $\signplus$ & $+$ & $0$ & $-$\\ \hline
										$+$& $+$ & $+$ & $*$ \\
									 	$-$& $*$ & $-$ & $-$ \\
									 	$0$& $+$ & $0$ & $-$ \\ \hline
							 \end{tabular}
							 \hfill
							 \begin{tabular}[t]{|c|ccc|}
									 \hline
									 $\signtimes$ & $+$ & $0$ & $-$\\ \hline
										$+$& $+$ & $0$ & $-$ \\
									 	$-$& $-$ & $0$ & $+$ \\
									 	$0$& $0$ & $0$ & $0$ \\ \hline
							 \end{tabular}
							 \hfill
							 \begin{tabular}[t]{|c|ccc|}
									 \hline
									 $\signminus$ & $+$ & $0$ & $-$\\ \hline
										$+$& $*$ & $+$ & $+$ \\
									 	$-$& $-$ & $-$ & $*$ \\
									 	$0$& $-$ & $0$ & $+$ \\ \hline
							 \end{tabular}
							 \caption{\label{fig:operation} The operation tables of sign addition, subtraction, and multiplication. The signs in the left columns are lefthand side operands.}
					 \end{table}
Another two fundamental operations of 3D vectors are inner product ($\cdot$) and outer product ($\times$), we define their sign-vector version in the same way as they are defined for numerical vectors:
\begin{equation}\footnotesize
	\begin{split}
	(u_1, u_2, u_3) \veccross (v_1, v_2, v_3) = ( (u_2 \signtimes v_3) \signminus  (u_3 \signtimes v_2), \\ (u_3 \signtimes v_1) \signminus  (u_1 \signtimes v_3), \\ (u_1 \signtimes v_2) \signminus  (u_2 \signtimes v_1)) \\
	(u_1, u_2, u_3) \vecdot (v_1, v_2, v_3) = (u_1 \signtimes v_1) \signplus (u_2 \signtimes v_2) \signplus (u_3 \signtimes v_3)
\end{split}
\end{equation}
\begin{figure}
  \centering
  \includegraphics[height=5cm]{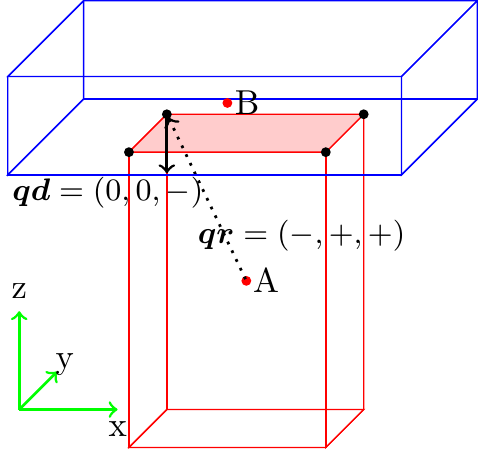}
    \caption{\label{fig:example_qualitative_force} Object A and B are in contact with the contact region highlighted in red. A contact force $\langle \fd, \fr, O_{A} \rangle$ is indicated by the black arrow. The red dots indicate the mass centres of A and B.}
\end{figure}
\begin{table}
\begin{tabular}{|l|p{5cm}|}
  \hline
  Notation  &  \\ \hline
  $O_i$, $O_{it}$, $\bm{O}_t$ & object $O_i$, state of $O_i$ at time $t$, states of set of objects.\\
  $\fd, \fr$ & sign vectors as illustrated in the left image. \\
  $\nlv, \lv$ & linear velocity and its sign vector representation. \\
  $\nav, \av$ & angular velocity and its sign vector representation. \\
  $\allsignvec$ & the set of all distinct sign vectors. \\
  $\tosign$ & a procedure that maps numerical entities to its sign representation. \\
  $\bm{D}$ & a set of qualitative forces. \\
  $\Delta(O_{t_1},O_{t_2})$ & a state change of $O$ from $t_1$ to $t_2$.\\
  $\bm{\Delta}_{\bm{D}}$ & all the possible state changes given by a set of qualitative forces $\bm{D}$. \\
  $\mathfrak{\bm{X}}_i$ & the force variables of object $O_i$. \\
  \hline
\end{tabular}
\caption{A list of notations used in this paper}
\end{table}

Based on this formalism, we now propose a qualitative representation for the forces in our domain.
\begin{Definition}[Qualitative Force]
A qualitative force on an object $O$ is a 3-tuple $\langle \fd, \fr, O \rangle$ where $\fd$ is a sign vector representing the qualitative direction of the force, $\fr$ is a sign vector of the direction pointed from the mass centre of $O$ to the point where the force is acted upon. The qualitative force of gravity on $O$ is $\langle \fd = (0, 0, -), \fr = (0, 0, 0), O \rangle$.
\end{Definition}
Given a qualitative force (see Fig.~\ref{fig:example_qualitative_force}), its components $\langle \fr, O \rangle$ refer to a qualitative location where the actual force is acted upon. We can obtain a more accurate region when the shape of an object is given. For notational convenience, we define a procedure $\tosign$ that will convert a 3D vector of real numbers to their sign counterparts or convert an actual force to its qualitative form.

We characterise the state of an object at a time point by the qualitative direction of its motion. 
\begin{Definition}[Object State]
The state of an object $O$ at time $t$, denoted $O_t$, is a tuple $\langle \lv_t, \av_t \rangle$ where $\lv_t, \av_t$ are the sign vectors of the object's linear and angular velocity, respectively. There are 27 $\times$ 27 = 729 possible qualitative states.
\end{Definition}

\begin{Definition}[State Change] The state change $\Delta(O_{t_1},O_{t_2})$ of an object $O$ from $t_1$ to $t_2$ is defined as follows.
\[ \Delta(O_{t_1},O_{t_2}) = \langle \tosign(\nlv_{t_2} - \nlv_{t_1}), \tosign(\nav_{t_2} - \nav_{t_1}) \rangle \]
\end{Definition}
\begin{Definition}[Qualitative Action]
An action exerts a impulse at a point location $p$ on the exterior boundary of an object $O$. A qualitative action is a qualitative force representing the impulse force exerted by that action.
\end{Definition}
Now we formally define the problem we want to solve in this paper.
\begin{Definition}[Action Inference Problem]
An action inference problem AIP$\langle \bm{O}_{t_1}, \bm{O}_{t_2} \rangle$ is, given a set $\bm{O}$ of objects and their qualitative states $\bm{O_{t_1}}$ at time $t_1$ and a set of their qualitative states $\bm{O_{t_2}}$ at later time $t_2$, assuming there is an action made between $t_1$ and $t_2$, what is the qualitative representation of the action?
\end{Definition}

\subsection {Reasoning about Object Motion}\label{sec:reasoning}
By Newton's second law of motion, the acceleration of an object is in the same direction of the net force on the object.
By reasoning about the differences between object velocities at two different time points, we could infer in which direction a net force can cause such change and we can further infer what forces on the object are contributing to forming the net force in the required direction.
Specifically, given an object $O$, and let $\bm{D}$ be the set of qualitative forces. We can derive the set of all possible net forces on $O$ by enumerating all the possible combinations of the individual qualitative forces in $\bm{D}$, from which we can obtain all the possible state changes that can be led by these forces:

\begin{equation} \label{eq:delta_d_o}
 \bm{\Delta}_{\bm{D}}
		 = \bigcup_{\bm{D_i} \in P(\bm{D})} \bigvecplus_{ \langle \fd, \fr, O\rangle \in \bm{D_i}} \fd \bigtimes \bigvecplus_{\langle \fd, \fr, O\rangle \in \bm{D_i}} \fr \veccross \fd
\end{equation}
where $P(\bm{D})$ refers to the power set of $\bm{D}$ and the symbol $\bigtimes$ refers to Cartesian product.

\begin{Lemma}\label{lemma:entail}
Given a state change $\Delta(O_{t_1},O_{t_2})$ resulted from a set of actual forces $\{\bm{f_1}, \bm{f_2}, \cdots \bm{f_n} \}$ acted upon $O$ between $t_1$ and $t_2$, let $\bm{QF}$ be a set of qualitative forces obtained by converting each actual force to its qualitative form, and let $\bm{D}$ be another set of qualitative forces. If $\bm{QF} \subset \bm{D}$, then $\Delta(O_{t_1},O_{t_2}) \in \bm{\Delta}_{\bm{D}}$.
\end{Lemma}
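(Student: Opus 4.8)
The plan is to exhibit, inside the union that defines $\bm{\Delta}_{\bm{D}}$ (Eq.~\ref{eq:delta_d_o}), one particular term that already contains the observed sign pair, and then to verify the two component memberships separately. The natural witness is the subset $\bm{D_i} = \bm{QF}$ itself: since $\bm{QF} \subset \bm{D}$ we have $\bm{QF} \in P(\bm{D})$, so the term
\[
\Big[\bigvecplus_{\langle \fd,\fr,O\rangle \in \bm{QF}} \fd\Big] \bigtimes \Big[\bigvecplus_{\langle \fd,\fr,O\rangle \in \bm{QF}} \fr \veccross \fd\Big]
\]
appears in the union. It therefore suffices to show (i) $\tosign(\nlv_{t_2}-\nlv_{t_1})$ lies in the first (linear) factor and (ii) $\tosign(\nav_{t_2}-\nav_{t_1})$ lies in the second (angular) factor. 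The physics enters only through two facts: by Newton's second law the change in linear velocity is parallel to the net force $\sum_i \force_i$, so $\tosign(\nlv_{t_2}-\nlv_{t_1}) = \tosign(\sum_i \force_i)$; and by the rotational analogue the change in angular velocity is aligned with the net torque $\sum_i \bm{r}_i \times \force_i$, where $\bm{r}_i$ points from the mass centre of $O$ to the application point of $\force_i$, so $\tosign(\nav_{t_2}-\nav_{t_1}) = \tosign(\sum_i \bm{r}_i \times \force_i)$. The soundness of this second identity is the subtle point, deferred to the end.

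The technical core is a soundness lemma for the sign arithmetic: the sign operations over-approximate their numerical counterparts. Concretely, for reals $a,b$ one has $\tosign(a+b) \in \tosign(a)\signplus\tosign(b)$, $\tosign(a-b)\in\tosign(a)\signminus\tosign(b)$, and $\tosign(ab)=\tosign(a)\signtimes\tosign(b)$, which is read off Table~\ref{fig:operation} (the $*$ entries are exactly the cases where cancellation makes the numerical sign genuinely ambiguous). Applying these componentwise and composing them through the definition of $\veccross$ gives $\tosign(\bm u \times \bm v) \in \tosign(\bm u)\veccross\tosign(\bm v)$ for any real vectors, and a short induction on the number of summands (using monotonicity of $\signplus$ over sets) gives the key over-approximation $\tosign(\sum_i \bm u_i) \in \bigvecplus_i \tosign(\bm u_i)$. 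Combining these, $\tosign(\sum_i \force_i) \in \bigvecplus_i \tosign(\force_i)$ and $\tosign(\sum_i \bm r_i\times\force_i) \in \bigvecplus_i \tosign(\bm r_i\times\force_i) \subseteq \bigvecplus_i \big(\tosign(\bm r_i)\veccross\tosign(\force_i)\big)$. Since the qualitative conversion sends each $\force_i$ to an element of $\bm{QF}$ whose direction sign is $\tosign(\force_i)$ and whose position sign is $\tosign(\bm r_i)$, these right-hand sums are exactly $\bigvecplus_{\langle \fd,\fr,O\rangle \in \bm{QF}} \fd$ and $\bigvecplus_{\langle \fd,\fr,O\rangle \in \bm{QF}} \fr\veccross\fd$, which establishes (i) and (ii). One small bookkeeping point: if two actual forces share the same qualitative form they collapse to a single element of the set $\bm{QF}$; this is harmless, since $s\signplus s = s$ for a definite sign, so dropping such duplicates does not change the sign-sum.

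The step I expect to be the main obstacle is justifying $\tosign(\nav_{t_2}-\nav_{t_1}) = \tosign(\sum_i \bm r_i\times\force_i)$. Unlike the linear case, where $\nlv$ and linear momentum differ only by the positive scalar mass, angular velocity and angular momentum are related by the inertia tensor $\bm L = I\nav$, which is generally non-diagonal and, worse, rotates with the body; so the sign of $\nav_{t_2}-\nav_{t_1}$ need not equal the sign of the change in $\bm L$ (the latter being what the net torque controls) over a finite interval $[t_1,t_2]$. I would handle this the way the qualitative theory implicitly does: argue at the level of a single infinitesimal simulation step (cf.\ Eq.~\ref{eq:euler}), over which the orientation --- and hence $I$ --- is effectively frozen, and invoke the modelling assumption that $I$ is diagonal and positive definite in the chosen reference frame, so that $\nav = I^{-1}\bm L$ preserves the sign of each component. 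Under this assumption the rotational analogue of Newton's law used above is exact and the lemma follows; making this assumption explicit, and noting its scope, is the only delicate part of the argument, the remainder being the routine sign-arithmetic soundness above.
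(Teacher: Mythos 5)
Your proof follows essentially the same route as the paper's: take $\bm{QF}\in P(\bm{D})$ as the witness subset in Eq.~\ref{eq:delta_d_o}, apply Newton's second law, and use the over-approximation property of sign summation to place $\tosign(\nlv_{t_2}-\nlv_{t_1})$ and $\tosign(\nav_{t_2}-\nav_{t_1})$ in the two factors of that term. The only substantive differences are matters of explicitness rather than of approach: you correctly flag the inertia-tensor issue as the delicate step in the angular case, which the paper disposes of with ``in the similar way,'' while the paper instead routes the time dependence through impulse integrals $\bm{F}_i=\int\force_i\,dt$ together with the implicit assumption $\tosign(\bm{F}_i)=\tosign(\force_i)$, a point your per-step argument sidesteps.
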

\begin{proof}[Proof]
Let $\nlp_{t1}$ be the linear velocity of an object at time $t_1$ and $m$ the mass of the object, let $\{\bm{f_1}, \bm{f_2}, \cdots \bm{f_n} \}$ be the set of the actual non-negligent forces on the object between time $t_1$ and $t_2$. By Newton's second law of motion, we have: \[\nlp_{t2} - \nlp_{t1} = \frac{1}{m}\sum_{i}^{n} \bm{F}_i, \bm{F}_i = \int_{t_i}^{t_j} \force_i dt, t_1 \leq t_i < t_j \leq t_2 \]
We can safely discard the constant $\frac{1}{m}$ and convert the equation to the sign representation. By definition of sign summation we have
\begin{equation}\label{eq:delta_v_numeric}
		\tosign({\nlp}_{t2} - {\nlp}_{t1}) = \tosign(\sum_{i}^{n} \bm{F}_i) \in \bigvecplus_{i} \tosign(\bm{F}_i)
\end{equation}
 By definition of sign addition, we have $\tosign(\bm{F}_i) = \tosign(\bm{f}_i)$ and replace it in Eq.~\ref{eq:delta_v_numeric} gives
\begin{equation}\label{eq:delta_v_sign} \tosign({\nlp}_{t2} - {\nlp}_{t1})
	\in \bigvecplus_{i}^{n} \tosign(\force_i)
	= \bigvecplus_{\langle \fd, \fr, O\rangle \in \bm{QF}} \fd \end{equation}
We can obtain the equation for the angular velocity in the similar way:
\begin{equation}\label{eq:delta_av_sign} \tosign(\nal_{t2} - \nal_{t1}) \in \bigvecplus_{\langle \fd, \fr, O \rangle \in \bm{QF}} \fr \veccross \fd \end{equation}
Because $\bm{QF} \subset \bm{D}$, we have $\bm{QF} \in P(\bm{D})$ and by Eq.~\ref{eq:delta_d_o}, we have $ \langle \tosign(\nlp_{t_2} - \nlp_{t_1}), \tosign(\nal_{t_2} - \nal_{t_1}) \rangle = \Delta(O_{t_1},O_{t_2}) \in \bm{\Delta}_{\bm{D}}$
\end{proof}
Lemma.~\ref{lemma:entail} says that given the set of qualitative forces $\bm{D}$, as long as $\bm{D}$ contains all the qualitative forces of the actual non-negligent forces, one can always find the observed state change in $\bm{\Delta}_{\bm{D}}$ . This lemma is crucial for the search of qualitative forces that cause the state change as the lemma guarantees that the algorithm will never miss a candidate set that contains solutions.

Ideally, for every object, we want to obtain a small-sized $\bm{D}$ that contains all the actual qualitative forces on the object. The size of $\bm{D}$ is up to 27 $\times$ 27 = 729, which is the number of possible combinations of two sign vectors. Now we introduce several rules that can help to reduce the size of $\bm{D}$ without discarding any solutions. Each rule specifies a condition that has to be met by the assignment of the qualitative forces. We first specify the conditions in numerical formulas (can be used when precise numbers are known) and then define their qualitative versions by replacing the numerical operations with the sign operations.  Let $p$ be the contact point between two objects $O_i$ and $O_j$.

\begin{figure}
\centering
  \subcaptionbox{\label{fig:rule_vanishing_point_before}}{\includegraphics[height=3.2cm]{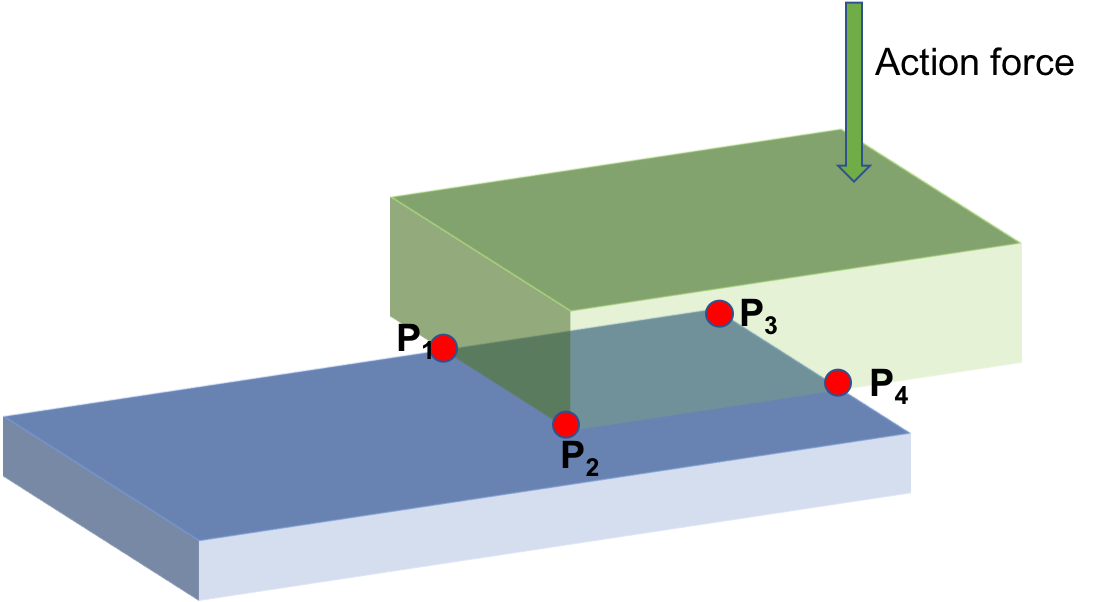}}
    \subcaptionbox{\label{fig:rule_vanishing_point_after}}{\includegraphics[height=3.2cm]{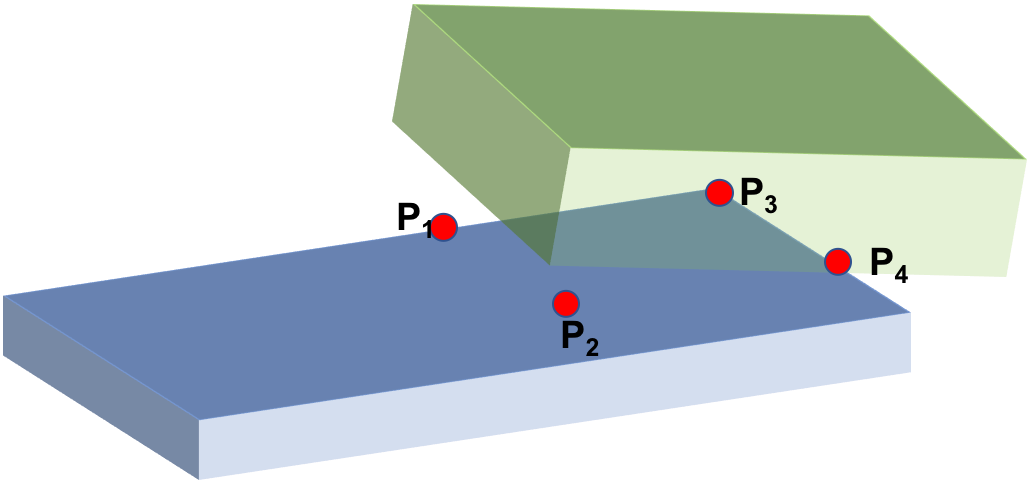}}
  \caption{\label{fig:rule_vanishing_point} An illustration of vanishing points. The green and blue objects are in contact initially (a), and the red dots indicate the contact points. After a suffficiently large force (indicated by the green arrow) is applied, the green object will start to move away at the points $p_1$ and $p_2$ which are considered as vanishing points.
  }
\end{figure}

\begin{Rule}[Vanishing Point]\label{rule:vanishing_point}
	A contact point $p$ is a vanishing point when $O_i$ and $O_j$ are moving away at $p$. There is no contact force at any vanishing point.
\end{Rule}
The concept of vanishing point is introduced in \cite{baraff1989analytical} based on the assumption that there is no attraction forces between objects. Therefore, when the two objects are moving away, the contact force will disappear at the point (see Fig.~\ref{fig:rule_vanishing_point}). The condition of not ``moving away" is given by \[\hat{n} \cdot (\bm{x}_i - \bm{x}_j) \leq 0, \bm{x} = \nlv + \nav \times r \]
where $\hat{n}$ is the contact normal and $\bm{x}$ is the linear velocity of the point $p$ on the object. The qualitative version of the condition can be defined in the same way:
\[\emptyset \neq \{-, 0\} \cap \bigcup_{\bm{q\delta} \in \bm{qx}_i \signminus \bm{qx}_j} \tosign(\hat{n}) \vecdot \bm{q\delta},	 \bm{qx}: \lv \signplus (\av \veccross \fr) \]


\begin{Rule}[No Attraction Force]\label{rule:no_attraction_force}
	The direction $\hat{f}$ of the contact force on an object should be pointing inwards to the mass centre, given by $\hat{f} \cdot \hat{n} \geq 0$.
\end{Rule}
This rule is also based on the no-attraction-force assumption, which requires contact forces always push the two objects away from each other. The qualitative version requires $\{+, 0\} \cap \fd \vecdot \tosign(\hat{n}) \neq \emptyset $.

\begin{Rule}[Newton's Third Law of motion]\label{rule:third_law}
	Given a contact point between $O_i$ and $O_j$, and let $\fd_i$ and $\fd_j$ be the qualitative directions of the two contact forces on $O_i$ and $O_j$, respectively. $\fd_i^{-1}$ = $\fd_j$.
\end{Rule}
This rule is given by Newton's third law of motion: When two object are in contact, the contact force on one object should be in the opposite direction of the contact force on the other object.
\begin{Lemma}\label{lemma:rule}
	Given a rule that is satisfied by a set of actual forces $\{\bm{f_1}, \bm{f_2} \cdots \bm{f_n}\}$, the qualitative version of the rule is also satisfied by $\{\tosign({\bm{f_1}}), \tosign({\bm{f_2}}) \cdots \tosign({\bm{f_n}}) \}$.
\end{Lemma}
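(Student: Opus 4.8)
The plan is to reduce the entire lemma to a single \emph{soundness} property of the sign operations in Table~\ref{fig:operation}: each operation is defined exactly so that the sign of a numerical result always belongs to the set produced by applying the corresponding sign operation to the signs of the operands. First I would record this at the scalar level, namely the exact identity $\tosign(ab)=\tosign(a)\signtimes\tosign(b)$ together with the memberships $\tosign(a+b)\in\tosign(a)\signplus\tosign(b)$ and $\tosign(a-b)\in\tosign(a)\signminus\tosign(b)$; each follows from a nine-case inspection of the relevant table, where the indefinite value $*$ is precisely the set of signs that a sum or difference of the given operand signs can realise. Iterating the additive case (already used implicitly in the proof of Lemma~\ref{lemma:entail}) and applying it componentwise to the definitions of the inner and outer products then yields the vector-level statements $\tosign(\bm{u}\cdot\bm{v})\in\tosign(\bm{u})\vecdot\tosign(\bm{v})$ and $\tosign(\bm{u}\times\bm{v})\in\tosign(\bm{u})\veccross\tosign(\bm{v})$.

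With soundness established, every rule is dispatched by the same template: the numerical premise fixes the sign of one scalar quantity, and soundness guarantees that this true sign survives into the qualitatively computed set, so the required intersection is nonempty. For No Attraction Force the premise $\hat{f}\cdot\hat{n}\ge 0$ yields $\tosign(\hat{f}\cdot\hat{n})\in\{+,0\}$, and vector soundness places this sign in $\fd\vecdot\tosign(\hat{n})$, whence $\{+,0\}\cap(\fd\vecdot\tosign(\hat{n}))\neq\emptyset$. Newton's Third Law is even more direct: $\force_i=-\force_j$ makes each component of one force direction the negation of the corresponding component of the other, so every sign flips and $\tosign(\hat{f}_i)=\tosign(\hat{f}_j)^{-1}$, i.e.\ $\fd_i^{-1}=\fd_j$, matching the definition of the inverse of a sign vector.

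I expect the Vanishing Point rule to be the main obstacle, since its condition is a nested chain of operations rather than a single product and the sign operations introduce indefinite components that must be read as sets. Here I would track the true sign of the point velocity $\nlv+\nav\times r$ through each layer: vector soundness places it inside $\lv\signplus(\av\veccross\fr)$, i.e.\ inside the qualitative $\bm{qx}$ of the object; subtraction then places the true sign vector of $\bm{x}_i-\bm{x}_j$ inside the \emph{set} $\bm{qx}_i\signminus\bm{qx}_j$; and for that particular element the dot product with $\tosign(\hat{n})$ is one of the terms of the union. Chaining these inclusions, the true sign of $\hat{n}\cdot(\bm{x}_i-\bm{x}_j)$---which the premise $\le 0$ forces into $\{-,0\}$---lies in $\bigcup_{\bm{q\delta}\in\bm{qx}_i\signminus\bm{qx}_j}\tosign(\hat{n})\vecdot\bm{q\delta}$, so the stated intersection is nonempty. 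The only care needed throughout is to keep the relation a membership rather than an equality, because an indefinite sign stands for the whole set of numerically consistent outcomes.
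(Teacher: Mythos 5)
Your proposal is correct and rests on exactly the same idea as the paper's proof: the sign operations are sound over-approximations of the corresponding numerical operations, so replacing each numerical operation by its sign counterpart preserves satisfaction of the rule condition as a set membership. The only difference is that you instantiate this soundness argument explicitly for each of Rules~\ref{rule:vanishing_point}--\ref{rule:third_law} (including the nested Vanishing Point condition), whereas the paper states the generic ``each sign operation covers all possibilities of the numerical operation'' argument once and leaves the per-rule verification implicit.
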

\begin{proof}[Proof]
The rule applies a series of arithmetic operations to the force vectors and is satisfied when the result is contained in the range specified by the rule condition. By the definition of sign calculus, each sign corresponds to a certain interval of real numbers, and the result of a sign operation covers all the possibilities of applying the corresponding numerical operation between numbers. Given that a qualitative rule is derived by replacing each numerical operation in the original rule with the corresponding qualitative operation, therefore, the qualitative rule will cover all the cases where the original rule is satisfied (the rule condition is met).
\end{proof}
\section{Solving Action Inference Problem}
\subsection{AIP-SAT}
We solve \emph{AIP} by formalising it as a constraint satisfaction problem \emph{AIP-SAT} $\langle \bm{\mathfrak{X}}, \bm{\mathfrak{D}}, \mathfrak{\bm{C}} \rangle$ where $\bm{\mathfrak{X}}$ is the set of variables with each variable can be assigned with a value from its non-empty domain $\bm{\mathfrak{D_i}} \in \bm{\mathfrak{D}}$. $\bm{\mathfrak{C}}$ is set of constraints with each constraint specifies some relations that must be held between a subset of variables. The goal is to find an assignment of qualitative forces that can cause the observed change from $\bm{O_{t_1}}$ to $\bm{O_{t_2}}$.
\begin{Definition}[AIP-SAT]
Given an AIP problem $AIP\langle \bm{O_{t_1}}, \bm{O_{t_2}} \rangle$, and let $n$ be the number of force variables, we obtain the following AIP-SAT problem:
\begin{itemize}
\item $\bm{\mathfrak{X}} = \{x_{action}, x_1, x_2, \cdots x_n\}$: Each variable $x_{1:n}$ corresponds to a force at a contact point or the force of gravity at the mass centre, and $x_{action}$ is the variable of the action. Given an object $O_i$, $\bm{\mathfrak{X}}_i$ denotes a subset of variables whose forces are on $O_i$.

\item $\bm{\mathfrak{D}} = \{ \bm{\mathfrak{D}}_{action},  \bm{\mathfrak{D}}_1 \cdots \bm{\mathfrak{D}}_n \}$: Each domain $\bm{\mathfrak{D}}_{1:n} = \{\langle \fd, \fr, O \rangle: \fd\in \mathfrak{S} \}$ contains a set of qualitative forces that $x_{1:n}$ can be assigned with; $\fr$ and $O$ are fixed except for the action variable $\bm{\mathfrak{D}}_a = \{ \langle \fd, \fr, O \rangle: \fd, \fr \in \bm{\mathfrak{S}}, O \in \bm{O}\}$ as we need to infer the location upon which the action is exerted.
\item  $\bm{\mathfrak{C}}$: There are two constraints, namely,
\begin{itemize}
	\item $C_1: \forall O_i \in \bm{O}, \Delta(O_{it_1},O_{it_2}) \in \bm{\Delta}_{ \bm{D}_{ \bm{ \mathfrak{X} }_i }} $ where $\bm{D}_{ \bm{ \mathfrak{X} }_i }$ is a set of assigned values of the variables in $\bm{ \mathfrak{X} }_i$.
	\item $C_2: \forall x \in \bm{\mathfrak{X}}$, value of x is consistent with Rule~\ref{rule:vanishing_point}-Rule~\ref{rule:third_law}.
\end{itemize}
\end{itemize}
\end{Definition}

Constraint $C_1$ requires that for each object $O$, there exists a combination of forces that can change the qualitative state from $O_{t_1}$ to $O_{t_2}$.
Constraint $C_2$ can be viewed as a set of unary constraints that restrict the domain of the force directions so that the assignment does not break any physical rule introduced in Sec.\ref{sec:reasoning}.

\subsection {AIP-Solver}\label{sec:algorithm}
In this section, we introduce a complete AIP-SAT algorithm based on graph-based tree search.

\paragraph{Structure Graph} We use a directed multi-graph $G$ to represent a set $\bm{O}$ of objects and their spatial relations at time $t$. Each vertex $v$ refers to an object $O \in \bm{O}$ and is labeled by the state tuple $O_t$. The vertex also maintains a flag that can be set to a status of either ``checked" or ``to-check". There is an edge from  $v_i$ to $v_j$ if there is a contact point between $O_i$ and $O_j$. Each edge represents a variable. During the search, the algorithm will label an edge with a qualitative force tuple, which can be viewed as we assign a value to a variable.

The algorithm employs depth-first tree search to find solutions to a given AIP-SAT$\langle O_{t_1}, O_{t_2} \rangle$ problem. Each node in the search tree maintains a structure graph, and each vertex of the graph is labelled with the corresponding object state $O_{it_1}$ at time $t_1$. In the beginning, all the vertices are marked as ``to-check". When branching a node, the algorithm finds a partial assignment of the variables belonging to the object $O_i$ represented by the ``to-check" vertex. If a partial assignment is found, the algorithm will label the vertex with the corresponding object state $O_{it_2}$. After the branching, the flag of the vertex is set to ``checked". The algorithm keeps branching nodes until there is no more vertex ``to-check". An assignment is found when every vertex $v_i$ has been checked and labelled with $O_{it_2}$.

Let $\bm{D}_{kwn}$ be a set of variables of $O_i$ that have already be assigned values.


To branch the root node (see Algo.~\ref{algo:branch_root_node}), for each object $O_i \in \bm{O}$, the algorithm obtains a set $\bm{QF}$ of qualitative forces that are complying with Rule.~\ref{rule:no_attraction_force} (Line.~\ref{line:find_initial_forces}). It creates a new node by adding each qualitative force $\qforce \in \bm{QF}$ to the set $\bm{D}_{kwn}$ of $O_i$ (Line.~3-4). The algorithm will start searching first from a set of objects that have different states at time $t_1$ and $t_2$, i.e., $\Delta(O_{it_1}, O_{it_2}) \neq \bm{0}$.

To branch an intermediate node (see Alg.~\ref{algo:branch_intermediate_node}), the algorithm selects an arbitrary ``to-check" vertex $v_i$ in the graph, where preferences are given to the ones that have nonempty $\bm{D}_{kwn}$. 
It then assign qualitative forces to variables so that $C_1$ and $C_2$ are satisfied. 
A backtrack happens when there is no such assignment.

To make an assignment, the algorithm first obtains the set of incoming edges $\bm{E}_{in}$ that are not yet labeled and are not connected to any vertex that has already been checked, which provides a set $\mathfrak{X_i}$ of unassigned variables (Line.~\ref{line:get_incoming_edges_begin}-\ref{line:get_incoming_edges_end}). The next task is to find a valid assignment to those variables in $\mathfrak{X_i}$ so that the constraints hold with $\bm{D}_{kwn} \cup \bm{D}_{\bm{\mathfrak{X_i}}}$ (Line.~\ref{line:assignment}).

Having obtained the assignment, the algorithm adds the assigned variables to $\bm{D}_{kwn}$ (Line.~\ref{line:merge}), and then creates a child node by updating its graph in the following steps(Line.~\ref{line:update_begin}-\ref{line:update_end}):
\begin{itemize}
  \item Label $v_{i}$ with $O_{it_2}$ and set the flag to ``checked".
  \item Label the edges in $\bm{E}_{in}$ according to the assignment and label their corresponding outgoing edges $\bm{E}_{out}$ with the qualitative forces in the opposite direction by Rule.~\ref{rule:third_law}.
  \item For each vertex $v_j$ whose incoming edges labeled in the previous step, add the corresponding qualitative forces to the set $\bm{D}_{kwn}$ of $v_j$.
\end{itemize}

Once a solution is detected, the assignment can be straightforwardly obtained from the set $\bm{D}_{kwn}$ of each vertex $v_i$.
\begin{algorithm}[t]
\KwInput{$\bm{O}:$ the set of objects in a scene}
  \For {$O_i \in \bm{O}$}
  {
    $\bm{QF}_i \leftarrow \{\langle \fd, \fr, O_i \rangle \textrm{ satisfies Rule.~\ref{rule:no_attraction_force}}, \fd,\fr \in \allsignvec \}$\; \label{line:find_initial_forces}

    \For{$\qforce \in \bm{QF}_i$}
    {
      $\bm{D}_{kwn} \leftarrow \{\qforce\}$\;
      Branch\_Intermmdiate\_Node($O_i, \bm{D}_{kwn}$)\;
    }
  }
 \caption{\label{algo:branch_root_node} Branch the root node}
\end{algorithm}

\begin{algorithm}[t]
\KwInput{$O_i, \bm{D}^{i}_{kwn}:$ the object at an intermediate node and the variables that have already been assigned.}
$\bm{E}_{in} \leftarrow \{e(v_j, v_i), \textrm{$v_j$ is labled with ``to-check"} \}$
\Comment $e(v_j, v_i)$: an edge from $v_j$ to $v_i$\; \label{line:get_incoming_edges_begin}
$\mathfrak{X_i} \leftarrow \textrm{variables given by }\bm{E}_{in}$\; \label{line:get_incoming_edges_end}
Assign each variable $x_i \in \mathfrak{X_i}$ so that  $C_1$ and $C_2$ is satisfied\; \label{line:assignment}
$\bm{D}^{i}_{kwn} \leftarrow \bm{D}^{i}_{kwn} \cup \bm{D}_{\bm{\mathfrak{X_i}}}$\; \label{line:merge}
Label $v_i$ with $O_{it_2}$ and set the flag to ``checked"\;
\For {$e(v_j, v_i) \in E_{in}$ \label{line:update_begin}} 
{
  label $e(v_j, v_i)$ with the assignment of the corresponding variable $x_i$\;
  label $e(v_i, v_j)$ with the qualitative force $\qforce$ given by Rule.~\ref{rule:third_law}. \;
  $\bm{D}^{j}_{kwn} \leftarrow \bm{D}^{j}_{kwn} \cup \{\qforce\}$\; \label{line:update_end}
}
 \caption{\label{algo:branch_intermediate_node} Branch an intermediate node}
\end{algorithm}
\begin{Theorem}[AIP-Solver is complete]
	Given an $AIP\langle O_{t_1}, O_{t_2} \rangle$ problem, the algorithm always finds an assignment that contains the actual qualitative forces leading to the state changes.
\end{Theorem}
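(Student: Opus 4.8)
The plan is to prove completeness in two stages. First I would show that the qualitative image of the \emph{true} force configuration that produced the observed transition is a feasible solution of the AIP-SAT instance. Second I would argue that the depth-first search of AIP-Solver is exhaustive over a finite search space that provably contains this configuration, so it cannot terminate without reporting it (or another solution subsuming it). The two earlier lemmas do almost all the work: Lemma~\ref{lemma:entail} guarantees that no candidate force set containing the truth is ever missed by $C_1$, and Lemma~\ref{lemma:rule} guarantees the qualitative rules behind $C_2$ never exclude the truth.

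First I would fix the set $\{\bm{f}_1,\ldots,\bm{f}_n\}$ of actual non-negligent forces realising the change from $\bm{O}_{t_1}$ to $\bm{O}_{t_2}$, with qualitative images $\bm{QF}=\{\tosign(\bm{f}_1),\ldots,\tosign(\bm{f}_n)\}$, and write $\bm{QF}_i$ for the subset acting on object $O_i$. I would then check feasibility of $\bm{QF}$. For $C_1$, Lemma~\ref{lemma:entail} applies with the per-object domain taken to be $\bm{QF}_i$ itself; the inclusion hypothesis $\bm{QF}_i\subseteq\bm{QF}_i$ holds trivially, so $\Delta(O_{it_1},O_{it_2})\in\bm{\Delta}_{\bm{QF}_i}$ for every $O_i$. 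For $C_2$, I would invoke Lemma~\ref{lemma:rule} three times: the actual forces obey Rules~\ref{rule:vanishing_point}--\ref{rule:third_law} as physical facts (contact forces vanish at separating points, contact forces never attract, action--reaction pairs are opposite), so their qualitative versions satisfy the corresponding qualitative conditions. Since the domain $\bm{\mathfrak{D}}_{action}$ of the action variable is the unrestricted $\allsignvec\times\allsignvec$ over all objects, the true action is also admissible. Hence $\bm{QF}$ together with the true action is a genuine solution of AIP-SAT.

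Next I would argue that this solution is \emph{reachable} by the search. The root-branching step (Algo.~\ref{algo:branch_root_node}) enumerates, per object, every qualitative force in $\allsignvec\times\allsignvec$ complying with the qualitative Rule~\ref{rule:no_attraction_force}; each $\tosign(\bm{f}_i)$ complies by Lemma~\ref{lemma:rule}, so no true force is excluded at the root. The intermediate-branching step (Algo.~\ref{algo:branch_intermediate_node}) assigns to the unlabelled incoming edges of a ``to-check'' vertex any values making $C_1$ and $C_2$ hold over $\bm{D}_{kwn}\cup\bm{D}_{\bm{\mathfrak{X}}_i}$; because the true per-object assignment satisfies these constraints, it is among the candidates the branch may select, and because all such candidates are tried on backtracking, the branch leading to $\bm{QF}$ is never pruned. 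The delicate point is the propagation via Rule~\ref{rule:third_law}: labelling $e(v_j,v_i)$ forces $e(v_i,v_j)$ to the inverse force. I would show this is consistent with $\bm{QF}$ precisely because the actual contact forces already satisfy Newton's third law, so the force propagated to $v_j$ coincides with the true qualitative force there; propagation therefore narrows the neighbour's domain \emph{toward}, never away from, the target.

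The main obstacle is this propagation-consistency argument together with the bookkeeping of the ``to-check''/``checked'' flags: I must confirm that once a vertex is marked ``checked'' the forces it fixes on its neighbours never contradict the freedom still needed to complete $\bm{QF}$, i.e.\ that the processing order cannot strand the search in a state from which the target is unreachable. Granting this, completeness follows from finiteness: each variable has a domain of size at most $27\times 27$ and there are finitely many edges, so the search tree is finite and depth-first search with backtracking visits every leaf. Since $\bm{QF}$ corresponds to at least one leaf satisfying all constraints, the algorithm is guaranteed to reach an assignment containing the actual qualitative forces before the tree is exhausted.
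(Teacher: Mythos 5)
Your proposal is correct and matches the paper's argument in substance: both rest entirely on Lemma~\ref{lemma:entail} to show $C_1$ never excludes the true forces and on Lemma~\ref{lemma:rule} to show $C_2$ never excludes them, combined with the exhaustiveness of the search. The only difference is presentational --- the paper argues by contradiction (``if the true assignment were pruned, the pruning would contradict one of the two lemmas'') while you argue directly that the true assignment is feasible and reachable, and you explicitly flag the Rule~\ref{rule:third_law} propagation step that the paper's proof passes over silently.
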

\begin{proof}[Proof]
Let $\{\bm{f}_{action}, \bm{f}_1, \bm{f}_2 \cdots \bm{f}_n\}$ be the set of the actual forces that contribute to the state change and let $\bm{A}$ be the corresponding assignment with $\{x_{action} \leftarrow \tosign(\bm{f}_{action}), x_1 \leftarrow \tosign(\bm{f}_1) \dots x_n \leftarrow \tosign(\bm{f}_n) \}$.
We prove the theorem by contradiction. Assuming the assignment $\bm{A}$ is not found by the algorithm. Since the algorithm will check each vertex in the graph, the partial assignment of $\bm{A}$ must be pruned by the algorithm at a branching stage of the search.

Without loss of generality, assuming the partial assignment is pruned at a node with a ``to-check" vertex $v_i$, and let $\bm{D}_{\bm{A}_i}$ be the set of qualitative forces on $O_i$ according to the assignment $\bm{A}$. Given the fact that the node is pruned, by the construction of the algorithm, we know $\Delta(O_{it_1}, O_{it_2}) \notin \bm{\Delta}_{\bm{D}_{i}}$.
When $\bm{D}_{\bm{A}_i} \subset \bm{D}_i$, the pruning contradicts Lemma 1.
When $\bm{D}_{\bm{A}_i} \not\subset \bm{D}_{i}$, it means that some of the qualitative forces in $\bm{D}_{\bm{A}_i}$ have been discarded by the algorithm. As the algorithm only discards assignments if they violate Rule~\ref{rule:vanishing_point}-\ref{rule:third_law}, given the actual forces do not violate any rule, the pruning contradicts Lemma 2.
\end{proof}

The branching factor of each intermediate node is equal to the number of possible partial assignments of the variables in $\mathfrak{X_i}$. This number can be huge when there are many variables that can be assigned to multiple values. To avoid the expensive branch factor, in practice, we can group some assignments of a variable into a set $A_i$ and the set is considered as a single assignment. A constraint is satisfied by $A_i$ as long as it is satisfied by any individual assignment in $A_i$. By this, we can substantially reduce the branching factor while it can be proven that the completeness of the algorithm is still guaranteed.

The algorithm can be made even more efficient by using heuristics that capture certain knowledge of the underlying domain. Here we give two example heuristics:

Given a variable of a contact force and its assigned qualitative force, if the assignment is not made by Rule.~\ref{rule:third_law}, we call this qualitative force \emph{resistant force}.
\begin{Heuristic}
If a qualitative force is a resistant force, it can only cancel, but not overwhelm, the effects of other forces.
\end{Heuristic}
For example, given a resistant force of direction $\langle +, -, 0 \rangle$ and another non-resistant force $\langle -, -, 0 \rangle$. Combining the two forces by this heuristic gives $\langle \{-, 0\}, -, 0\rangle$. We create this heuristic according to the standard simulation routine \cite{baraff1997introduction2} of computing contact forces. Once the simulation detects a potential inner-penetration (collision) between objects, it will calculate a contact force (resistant force) to resist the inner-penetration. When the collision is inelastic, the effect of this resistant force on object's momentum can hardly overwhelm the effect of the force that causes the inner-penetration.
\begin{Heuristic}
The action that is made to an object will cause the movement of the object.
\end{Heuristic}
This heuristic assumes that an object will change its state after an action is made to it. Under this assumption, the algorithm can start the search from only the objects whose states have changed.


\section{Evaluation}
We implemented the algorithm and evaluated it in both simulated and the real-world environments.

We use Mujoco(www.mujoco.org) that is a state-of-the-art physics engine used in the robotics research. We create several scenes with each scene contains a set of blocks of different sizes and physical properties. In the beginning the blocks are forming a stable structure, and we use the state of the objects as the \emph{before state}. Since we use simulation, we obtain the locations of contact points and mass centres directly from the simulator, and use them to compute the sign vectors. One thing to emphasise is that we do not have to know the exact numbers as it does not make any difference to the result if two different numbers give the same sign vector. We make an action that exerts an impulse on one of the objects and the impulse is chosen to cause a significant movement of objects. We simulate the scene for a number of time steps and then take the state of the objects as the \emph{after state}. We run the algorithm on the two states to infer the qualitative representation of the action. As we proved, the algorithm found correct actions in all the experiments.
\begin{figure}
\centering
\includegraphics[width=\linewidth]{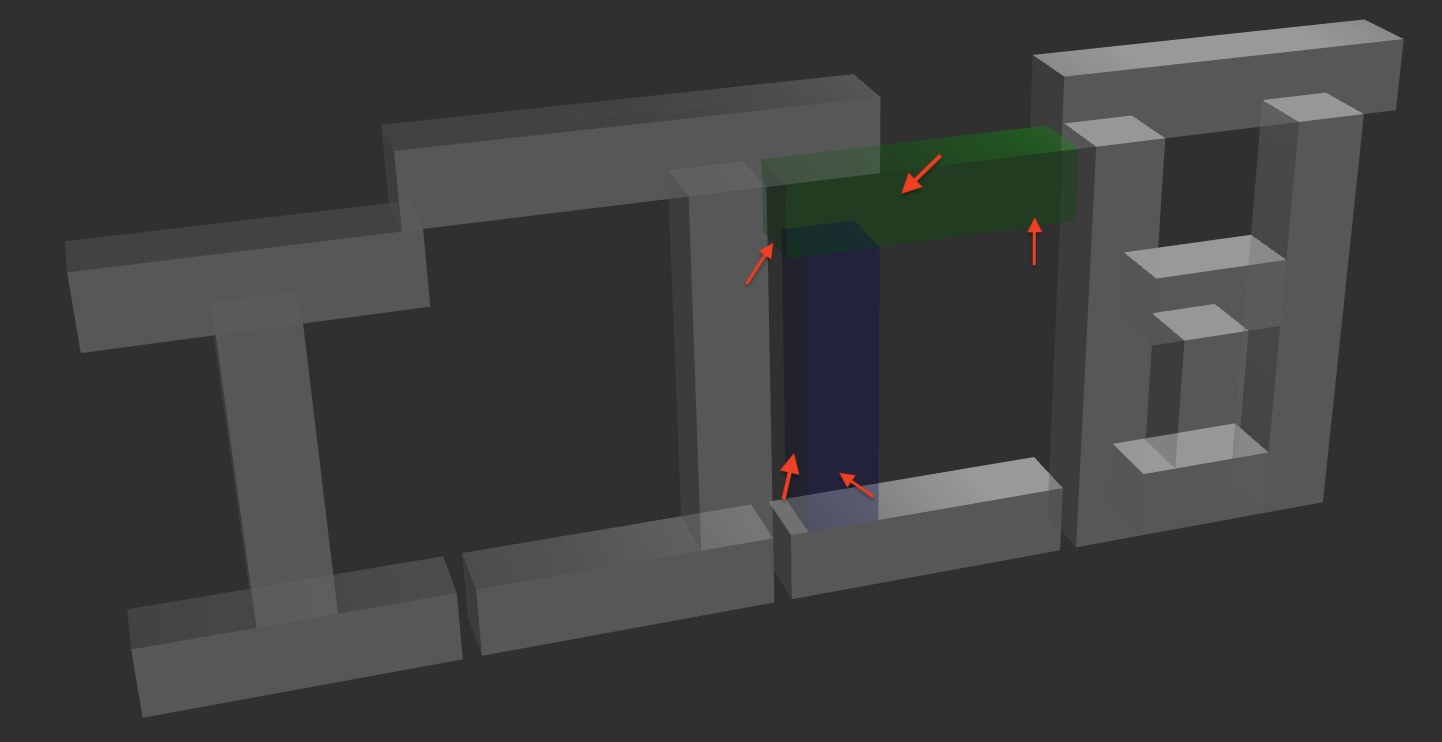}
  \caption{\label{fig:example_result}The method found multiple qualitative forces (some are depicted in red arrows) on the blue and green block as solutions to the example problem.}
\end{figure}

Given a scene, the total number of candidate qualitative actions is equal to the number of all the possible combinations of qualitative force directions (except the zero sign vector) and their possible qualitative locations . In the example given in Fig.~\ref{fig:example} where there are 15 objects, there are 26 $\times$ 27 $\times$ 15 = 10530 possible qualitative actions. The algorithm finds 48 different qualitative forces on two objects using Heuristic~1-2 while using only Heuristic~2 it finds 772 different qualitative forces on the same objects (see Fig.~\ref{fig:example_result}).
\begin{figure}\label{fig:simple_example}
\centering
  \subcaptionbox{\label{fig:simple_example_before}}{\includegraphics[height=4cm, width=.48\linewidth]{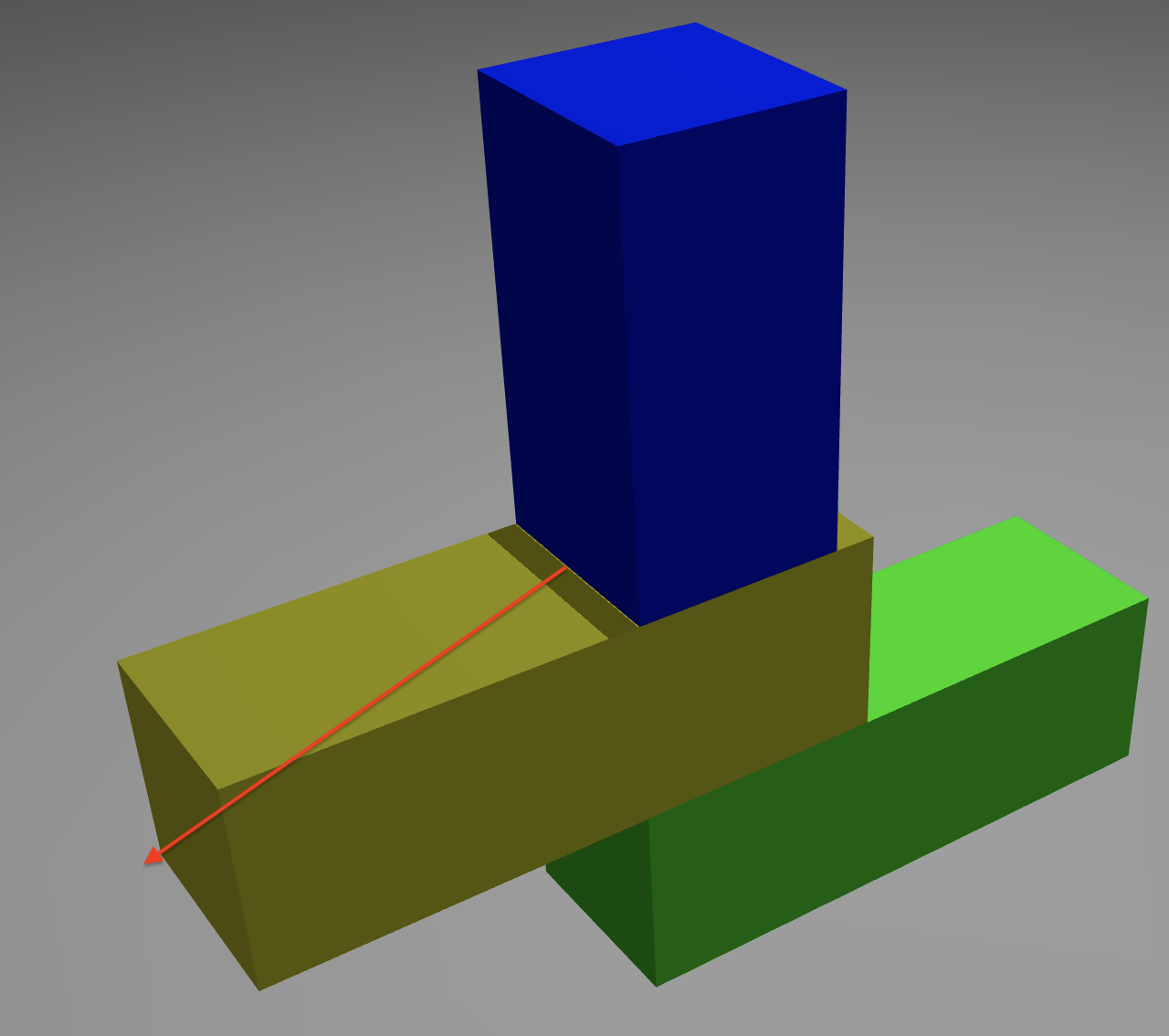}}
    \subcaptionbox{\label{fig:simple_example_after}}{\includegraphics[height=4cm, width=.48\linewidth]{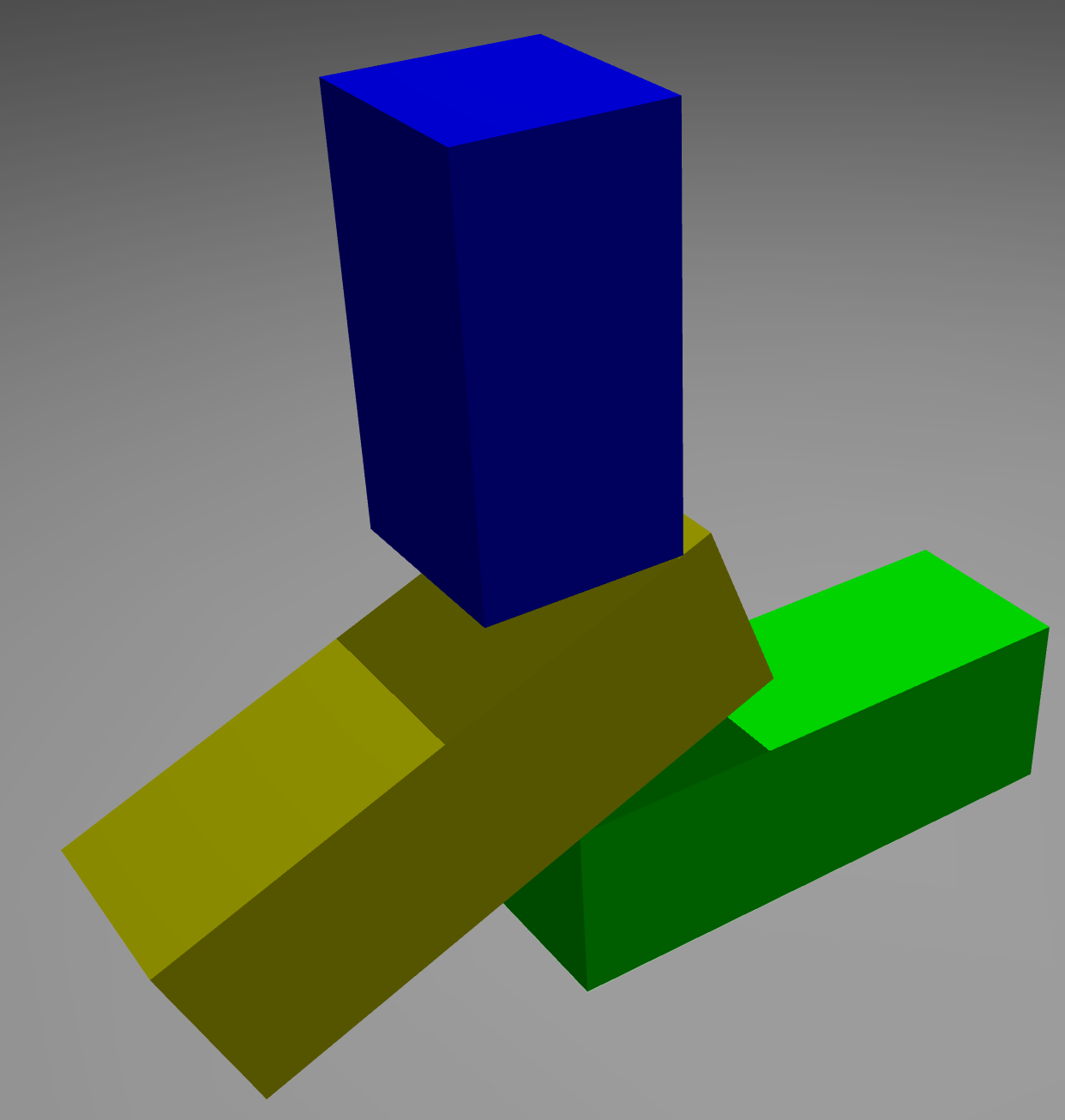}}
  \caption{The red arrow shows one detected qualitative force that has strict conditions to be instantiated. The real force that can cause this movement should have a large magnitude along the horizontal direction while slightly go downwards. 
  }
\end{figure}

At first glance, there are lots of possibilities, however, these possibilities can be further eliminated if we could infer an range of possible action force directions or locations. We also construct a scene that has a simple structure (see Fig.~\ref{fig:simple_example_before}-c) and manually verify each identified qualitative forces to see whether they can cause the observed movement as the actual force does. Surprisingly, most of the reported solutions can make it, and some solution (see Fig.~\ref{fig:simple_example_before}) requires very strict conditions to be successfully instantiated in the simulation.

In the real world experiment, we obtain visual inputs of a scene using Kinect 2 that generates RGBD images. The time gap between the before and after scenes is around 1 second. We use a segmentation algorithm provided in \cite{silberman2012indoor} to detect boxes and their spatial properties. We obtain the linear and angular velocity of each object from the differences in their positions and orientations at two different time points. To further demonstrate the capability of dealing with ambiguous information, we only use the contact points that are visible from the image. An example scene is shown in Fig.~\ref{fig:real_world_example}. Our method detected the real solution and found 80 possible qualitative forces on either the blue or red box.
\begin{figure}
\centering
  \subcaptionbox{\label{fig:real_world_example_before}}{\includegraphics[height=4cm, width=0.48\linewidth]{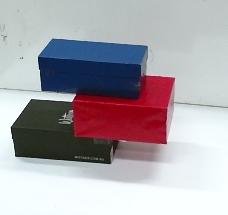}}
    \subcaptionbox{\label{fig:real_world_example_after}}{\includegraphics[height=4cm, width=0.48\linewidth]{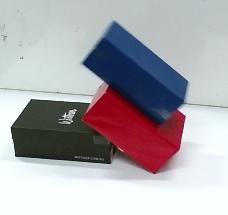}}
      \subcaptionbox{\label{fig:real_world_segmentation}}{\includegraphics[height=4cm]{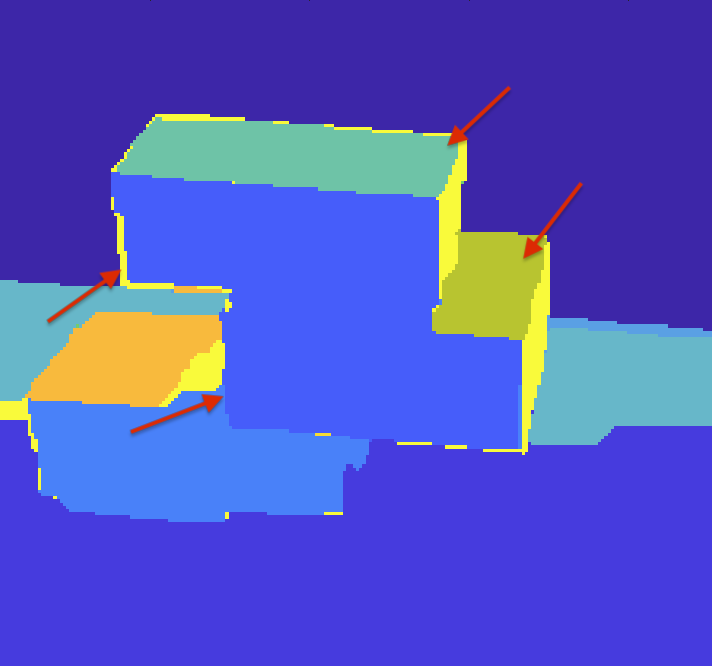}}
  \caption{\label{fig:real_world_example} (a-b) The before and after image of an real world example. (c) The segmentation of the before image. We indicate some identified solutions in red arrows 
  }
\end{figure}
\subsection{Discussion, Generalisation and Future Work}
The algorithm we present relies on very general assumptions. Given a specific problem domain, we can keep adding realistic assumptions about the domain for better performance. As hinted above, the algorithm can benefit from spatial reasoning algorithms that help to restrict the range of possible qualitative actions further. For example, in a robotics manipulation scenario, we could infer the range of the space that a robotic arm can reach and use this knowledge to prune those unreachable actions. This reasoning could be done either numerically such as trajectory planning or qualitatively such as reasoning about qualitative rotations of arms. The explanation generated by the algorithm is in the form of a set of the assigned qualitative forces which are readily be labelled with causal roles according to the theory proposed in \cite{wolff2015causal}.

Our theory can work with qualitative and quantitative constraints in the same way as we specified the rules. One future direction can be extending the theory to cover magnitude quantities. By this we could infer whether the mass or inertial tensor of an object would prevent any potential movement caused by a force of certain magnitude. Another promising direction is to use the qualitative inference to guide the search of solutions for simulation-based methods. The hierarchical optimisation framework proposed in \cite{tossaint2015logic} could be one possibility.

The completeness proof only holds when we know the qualitative locations, i.e., the tuple $\langle \fr, O\rangle$, of all the contact points beforehand. However, it could be possible that some of the contact points are missing due to imperfect perception or unobserved collisions between $t_1$ and $t_2$. To deal with this problem, we can extend the formulation of AIP-SAT by adding free variables representing the forces on unknown contact points. Each free variable has the same form as the action variable of which the qualitative location is not fixed. Hence, it would be desirable to have a method to tell if the qualitative location of each assignment is actually reachable by the involved objects. A simple heuristic can be to check whether there is an open path between the two collided objects given their motion state. As inferring potential collisions between multiple moving objects 
itself is a challenging problem in both simulation (collision detection \cite{kockara2007collision}) and qualitative reasoning (\cite{ge2016hole}) area, we left the investigation of this problem as future work.

\section{Conclusion}
This paper proposed a qualitative theory for the motion of rigid objects based on the modelling approaches from qualitative reasoning and physics simulation areas. Based on the formulation we solved an interesting action inference problem. We proved the completeness of the algorithm and applied it to both simulated and real-world environments. We hope this work opens up a new direction of using qualitative reasoning for drawing explainable physical inferences in daily scenarios, which could contribute to achieving the goal of building intelligent physical systems.

\bibliographystyle{aaai}
\bibliography{ijcai18}

\end{document}